\documentclass[letterpaper, 10 pt, conference]{ieeeconf}

\usepackage{times}
\makeatletter
\let\NAT@parse\undefined
\makeatother
\usepackage[numbers]{natbib}
\usepackage[hidelinks,bookmarks=true]{hyperref}
\usepackage{amsmath}
\usepackage{amssymb}

\usepackage{amsthm}
\usepackage{xcolor}
\usepackage{graphicx}
\usepackage{booktabs}
\usepackage{bm}
\usepackage{mathtools}
\newtheorem{theorem}{Theorem}
\newtheorem{lemma}[theorem]{Lemma}
\newtheorem{corollary}[theorem]{Corollary}
\newtheorem{proposition}[theorem]{Proposition}
\newtheorem{definition}[theorem]{Definition}

\DeclareMathOperator{\tr}{tr}

\DeclareMathOperator*{\argmin}{arg\,min}

\newcommand{\reals}{\mathbb{R}}
\newcommand{\expect}{\mathbb{E}}
\newcommand{\prob}{\mathbb{P}}
\newcommand{\xvec}{x}
\newcommand{\evec}{e}
\newcommand{\xivec}{\xi}
\newcommand{\Amat}{A}
\newcommand{\Bmat}{B}
\newcommand{\Cmat}{C}
\newcommand{\opnorm}[1]{\left\|#1\right\|_{\mathrm{op}}}
\newcommand{\twonorm}[1]{\left\|#1\right\|_2}
\newcommand{\psd}{\preceq}

\usepackage{flushend}
\providecommand{\IEEEkeywordsname}{Index Terms}
\newenvironment{IEEEkeywords}{\par\addvspace{0.5\baselineskip}\noindent\small\textbf{\textit{\IEEEkeywordsname.\ }}}{\par\addvspace{0.5\baselineskip}}

\begin{document}

\title{Behavior Cloning Under PD Control:\\A Finite-Horizon Theory of Gain-Dependent Error Amplification}

\IEEEoverridecommandlockouts
\author{Junghoon Seo%
\thanks{Junghoon Seo is with PIT IN Corp., South Korea (e-mail: sjh@pitin-ev.com).}%
}

\maketitle

\begin{abstract}
Behavior cloning (BC) on position-controlled robots is shaped by the PD loop that executes policy actions. We give a finite-horizon, nonasymptotic analysis of how controller gains affect BC failure. Independent sub-Gaussian action errors propagate through gain-dependent closed-loop dynamics into sub-Gaussian position errors. The resulting failure tail is controlled by controller amplification multiplied by validation loss and generalization slack, so validation loss alone can mis-rank gains. Under shape-preserving \emph{upper-bound} assumptions, the analysis separates label difficulty, injection strength, and contraction, ranking compliant-overdamped gains as tightest and stiff-underdamped gains as loosest, with the mixed regimes system-dependent. In the canonical scalar second-order PD system, stationary position-error variance increases with stiffness and decreases with damping over the stable range, and exact zero-order-hold discretization inherits the ordering to leading order. This extends the error-attenuation explanation of~\cite{bronars2026tune} to finite-horizon failure bounds.
\end{abstract}

\begin{IEEEkeywords}
Behavior cloning, controller gains, sub-Gaussian concentration, imitation learning.
\end{IEEEkeywords}

\section{Introduction}\label{sec:intro}

Position-controlled action spaces are a standard interface for learned manipulation policies~\cite{chi2025diffusion, chi2024universal, zhao2023learning}. A learned policy outputs a position target, which a PD controller~\cite{kelly1997pd} converts to torques,
\begin{equation}\label{eq:pd}
\tau_t = K_p(a_t - q_t) - K_d\,\dot q_t + g(q_t),
\end{equation}
where $K_p,K_d$ are diagonal positive-definite proportional and derivative gains. Although this interface is ubiquitous, gain selection for \emph{learning} remains largely heuristic. For behavior cloning (BC), this is problematic: gains alter the labels learned from demonstrations, map residual action errors into robot motion, and set how quickly those errors are dissipated.

Bronars~et~al.~\cite{bronars2026tune} find that BC benefits from \emph{compliant} low-$K_p$ and \emph{overdamped} high-$K_d$ gains. These settings improve closed-loop success despite higher validation loss, which they explain by PD error attenuation and verify with noise-injection experiments. Missing is a finite-horizon theory connecting this mechanism to validation loss, generalization slack, and failure probabilities. The gap matters because validation loss is measured before the controller acts, while task failure occurs after residual errors have passed through gain-dependent closed-loop dynamics.

Existing continuous-action BC theory studies function-approximation sensitivity~\cite{pmlr-v291-simchowitz25a} and autoregressive noise amplification~\cite{block2024butterfly}, and classical compounding-error analyses~\cite{ross2010efficient, ross2011reduction, spencer2021feedback} treat the dynamics as a black box. Action-space and controller-design studies show that the interface matters~\cite{mandlekar2021what, aljalbout2024role, esser2024action}, while stochastic-control analyses propagate disturbances through linear systems~\cite{recht2019tour, boffi2021regret, tu2019gap}. We instead expose the gain-tuned closed loop and show that closed-loop BC performance is governed by prediction loss multiplied by controller amplification.

Our contributions are:
\begin{enumerate}
\item \textbf{Sub-Gaussian propagation} (Theorem~\ref{thm:subgaussian}). Independent action errors induce position-error tails through a finite-horizon Lyapunov sum whose weights depend on the PD gains.
\item \textbf{Failure bound} (Theorem~\ref{thm:failure}). Rollout failure is controlled by validation loss and generalization slack multiplied by closed-loop amplification.
\item \textbf{Scalar ordering} (Theorems~\ref{thm:scalar}--\ref{thm:regime}). Shape-preserving upper bounds reduce regime comparison to a single scalar index.
\item \textbf{Canonical monotonicity} (Theorem~\ref{thm:canonical}). Scalar position-error variance increases with stiffness and decreases with damping, and exact ZOH discretization is consistent to leading order.
\end{enumerate}

\section{Related Works}\label{sec:related}

\paragraph{Behavior cloning theory}
Behavior cloning~\cite{pomerleau1989alvinn, 10.5555/647636.733043} learns a policy by supervised regression on expert state-action pairs. The foundational analysis of Ross and Bagnell~\cite{ross2010efficient} shows that the expected cost of the learned policy grows quadratically with horizon~$T$ from compounding errors, motivating interactive approaches such as DAgger~\cite{ross2011reduction}. Subsequent work refines these bounds under structural assumptions~\cite{spencer2021feedback, florence2022implicit}, sharpens the horizon dependence~\cite{foster2024behavior}, and links low-level dynamical stability to task-level guarantees~\cite{block2023provable}. Building on the controller-level mechanism identified in~\cite{bronars2026tune}, our analysis treats the controller gain as a structural variable in nonasymptotic BC error propagation.

\paragraph{Impedance control}
The interaction between compliance and task performance has a long history in robotics~\cite{hogan1984impedance, hogan1985impedance, khatib2003unified}, with variable impedance learning~\cite{bogdanovic2020learning, wu2021framework, kronander2013learning} adapting compliance during execution. Our contribution is orthogonal. We analyze how \emph{fixed} gains modulate the propagation of prediction errors during BC rollout.

\paragraph{Controller design for robot learning}
The choice of action space and controller gains has long been recognized as a critical design axis for robot learning~\cite{aljalbout2024role, esser2024action, mandlekar2021what}. Kim~et~al.~\cite{kim2023torque} show that torque-level control improves task-agnostic transfer, and Bronars~et~al.~\cite{bronars2026tune} provide a systematic study of how PD gains affect BC, RL, and sim-to-real transfer, identifying compliant-overdamped settings as the BC optimum. They also propose and test a mechanistic explanation based on closed-loop error attenuation, using open-loop noise-injection experiments to isolate the effect and deriving the stationary variance that captures it. The theoretical analysis below extends that account to sub-Gaussian propagation, finite-horizon failure bounds, and generalization slack.

\section{Problem Formulation}\label{sec:problem}

\subsection{Gain-Dependent Closed-Loop Error Dynamics}\label{sec:dynamics}

Consider a robot with~$n$ joints and inertia~$M\in\reals^{n\times n}$ operating under the PD controller~\eqref{eq:pd} with gain setting~$K=(K_p,K_d)$. Let~$q_t^\star$ denote the expert joint trajectory and~$a_t^\star$ the expert action. Define the position error $\evec_t := q_t-q_t^\star$ and the prediction error $\xivec_t := \hat\pi_K(s_t)-a_t^\star$. Substituting $a_t = a_t^\star+\xivec_t$ into~\eqref{eq:pd} and linearizing about the expert trajectory yields
\begin{equation}\label{eq:error-ct}
M\,\ddot\evec_t = -K_p\,\evec_t - K_d\,\dot\evec_t + K_p\,\xivec_t,
\end{equation}
so the action error enters scaled by~$K_p$ and stiffer gains amplify both the restoring force on $\evec_t$ and the injection of action errors. Setting $\xvec_t := [\evec_t^\top,\,\dot\evec_t^\top]^\top \in \reals^{2n}$ gives the continuous state-space form
\begin{equation}\label{eq:cont-state}
\dot\xvec_t = \Amat_K^{\mathrm{c}}\xvec_t + \Bmat_K^{\mathrm{c}}\xivec_t,\qquad \evec_t = \Cmat\xvec_t,
\end{equation}
with $\Amat_K^{\mathrm{c}} = \begin{bsmallmatrix} 0 & I_n \\ -M^{-1}K_p & -M^{-1}K_d \end{bsmallmatrix}$, $\Bmat_K^{\mathrm{c}} = \begin{bsmallmatrix} 0 \\ M^{-1}K_p \end{bsmallmatrix}$, and $\Cmat = [I_n,\,0]$.

The policy emits a new target every~$\Delta t$ seconds and the controller holds it constant between updates, so the sampled state obeys
\begin{equation}\label{eq:dynamics}
\xvec_{t+1} = \Amat_K\xvec_t + \Bmat_K\xivec_t,\qquad \evec_t = \Cmat\xvec_t,
\end{equation}
with matrices given by the \emph{exact zero-order-hold (ZOH)} sampling of~\eqref{eq:cont-state},
\begin{equation}\label{eq:zoh}
\Amat_K = e^{\Amat_K^{\mathrm{c}}\Delta t},\quad \Bmat_K = \Bigg(\int_0^{\Delta t}\! e^{\Amat_K^{\mathrm{c}} s}\,ds\Bigg)\Bmat_K^{\mathrm{c}}.
\end{equation}
We consider gains for which the sampled closed loop is asymptotically stable, $\rho(\Amat_K)<1$.

\subsection{Action Prediction Error Model}\label{sec:error-model}

We model the prediction errors $\{\xivec_t\}$ as \emph{independent, mean-zero, sub-Gaussian} random vectors. There exists a PSD matrix $\Sigma_K^{\mathrm{roll}}\succeq 0$ such that
\begin{equation}\label{eq:subgauss}
\expect\!\left[\exp(\lambda^\top\xivec_t)\right] \le \exp\!\left(\tfrac{1}{2}\lambda^\top\Sigma_K^{\mathrm{roll}}\lambda\right),\quad \forall\,\lambda\in\reals^m.
\end{equation}
The proxy covariance~$\Sigma_K^{\mathrm{roll}}$ governs the tail behavior, with the subscript~$K$ reflecting that different gains induce different action labels and hence different residual statistics. The \emph{rollout-local population action MSE} is $L_{\mathrm{roll}}(K) := \expect[\twonorm{\xivec_t}^2] = \tr(\Sigma_K^{\mathrm{roll}})$.

These independence and tail assumptions are rollout-local. They are most appropriate for high-rate control with locally linear motion and residuals dominated by fresh light-tailed noise (sensing, quantization, communication, numerical, or stochastic inference) rather than persistent model bias. If low-frequency bias, contact modes, or accumulated covariate shift induce temporal dependence, the results remain conservative after replacing~$\Sigma_K^{\mathrm{roll}}$ by a uniform conditional sub-Gaussian proxy, or inflating it to absorb the residual correlation time.

\subsection{Task Success, Validation Loss, and Amplification}\label{sec:fail-and-amp}

A rollout is successful if the position error stays within a success tube of radius~$r>0$ over horizon~$T$. The complementary failure event is
\begin{equation}\label{eq:fail}
\mathrm{Fail}_T := \left\{\exists\, t\le T : \twonorm{\evec_t}\ge r\right\}.
\end{equation}
After training, the empirical validation loss over $N_{\mathrm{va}}$ held-out pairs $\{(\tilde s_j,\tilde a_j)\}$ is $\hat L_{\mathrm{va}}(K) := N_{\mathrm{va}}^{-1}\sum_j\twonorm{\hat\pi_K(\tilde s_j)-\tilde a_j}^2$, and the training loss $\hat L_{\mathrm{tr}}(K)$ is defined analogously. A standard generalization argument gives, with probability at least~$1-\delta$,
\begin{equation}\label{eq:gen}
L_{\mathrm{roll}}(K) \le \hat L_{\mathrm{va}}(K) + \varepsilon_{\mathrm{gen}}(K,\delta).
\end{equation}
We define the \emph{finite-horizon amplification index}
\begin{equation}\label{eq:gamma}
\Gamma_T(K) := \max_{0\le t\le T}\sum_{s=0}^{t-1}\opnorm{\Cmat\Amat_K^s\Bmat_K}^2,
\end{equation}
which measures the worst-case cumulative operator-norm gain from action errors to position errors and depends on the gains through $\Amat_K$ and $\Bmat_K$.

\section{Main Results}\label{sec:results}

We present four results in a logical chain: a sub-Gaussian propagation theorem, a closed-loop failure bound, a scalar ordering reduction in the upper-bound direction, and a regime comparison.

\subsection{Sub-Gaussian Error Propagation}

Theorem~\ref{thm:subgaussian} lifts the per-step sub-Gaussian assumption on the action error to a horizon-wide statement about the position error. The lift is a direct consequence of linearity: a weighted sum of independent sub-Gaussians is sub-Gaussian, and the weights are exactly the closed-loop impulse response at lag~$s$. The Lyapunov limit~$S_\infty(K)$ is the standard quantity for disturbance-to-state propagation in discrete LTI systems, and the role of this paper is to track how every ingredient depends on the controller gain, so tuning $(K_p, K_d)$ reshapes the entire tail of~$\evec_t$.

\begin{theorem}[Sub-Gaussian propagation]\label{thm:subgaussian}
Consider~\eqref{eq:dynamics} with~$\xvec_0 = 0$ and action errors satisfying~\eqref{eq:subgauss}. Define the \emph{proxy matrix}
\begin{equation}\label{eq:St}
S_t(K) := \sum_{s=0}^{t-1} \Amat_K^s \Bmat_K \Sigma_K^{\mathrm{roll}} \Bmat_K^\top (\Amat_K^\top)^s,
\end{equation}
and its position-error projection $X_t(K) := \Cmat\, S_t(K)\, \Cmat^\top$. Then~$\evec_t$ is mean-zero sub-Gaussian with proxy~$X_t(K)$. For all~$u \in \reals^n$,
\begin{equation}\label{eq:et-subgauss}
\expect\!\left[\exp(u^\top \evec_t)\right] \le \exp\!\left(\tfrac{1}{2} u^\top X_t(K)\, u\right),
\end{equation}
\begin{equation}\label{eq:et-tail}
\prob\!\left(|u^\top \evec_t| \ge r\right) \le 2\exp\!\left(-\frac{r^2}{2\, u^\top X_t(K)\, u}\right).
\end{equation}
If~$\rho(\Amat_K) < 1$, then~$S_t(K) \to S_\infty(K)$, the unique PSD solution of the discrete Lyapunov equation
\begin{equation}\label{eq:lyap}
S_\infty(K) = \Amat_K S_\infty(K) \Amat_K^\top + \Bmat_K \Sigma_K^{\mathrm{roll}} \Bmat_K^\top,
\end{equation}
and the stationary proxy is~$X_\infty(K) = \Cmat\, S_\infty(K)\, \Cmat^\top$.
\end{theorem}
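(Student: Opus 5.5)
Unrolling the recursion~\eqref{eq:dynamics} from $\xvec_0=0$ gives the explicit convolution
\begin{equation*}
\evec_t = \Cmat\xvec_t = \sum_{s=0}^{t-1} \Cmat\Amat_K^{s}\Bmat_K\,\xivec_{t-1-s},
\end{equation*}
which we verify by induction on $t$. The whole argument is then a moment-generating-function computation for this finite weighted sum. Mean zero follows from linearity of expectation and $\expect[\xivec_t]=0$. The vector $\xivec_t$ is mean-zero because~\eqref{eq:subgauss} applied to $\pm\epsilon\lambda$ and expanded to first order forces $\expect[\lambda^\top\xivec_t]=0$. The paper also assumes this directly.

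For~\eqref{eq:et-subgauss}, we fix $u\in\reals^n$ and write
\begin{equation*}
u^\top\evec_t=\sum_{s}\lambda_s^\top\xivec_{t-1-s},\qquad \lambda_s:=\Bmat_K^\top(\Amat_K^\top)^s\Cmat^\top u.
\end{equation*}
By independence of the $\xivec$'s, the MGF factorizes into a product over $s$. Applying~\eqref{eq:subgauss} to each factor with $\lambda=\lambda_s$ gives
\begin{equation*}
\expect\!\left[e^{u^\top\evec_t}\right]\le \exp\!\Big(\tfrac12\sum_s \lambda_s^\top\Sigma_K^{\mathrm{roll}}\lambda_s\Big),
\end{equation*}
and the exponent sum is exactly $u^\top\Cmat S_t(K)\Cmat^\top u=u^\top X_t(K)u$ by~\eqref{eq:St}. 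For the tail bound~\eqref{eq:et-tail}, we apply the Chernoff bound to $\theta u$ with $\theta>0$:
\begin{equation*}
\prob(u^\top\evec_t\ge r)\le \exp\!\big(-\theta r+\tfrac12\theta^2 u^\top X_t u\big).
\end{equation*}
Optimizing at $\theta=r/(u^\top X_t u)$ yields $\exp(-r^2/(2u^\top X_t u))$. Applying the same bound to $-u$ and taking a union bound gives the factor $2$. If $u^\top X_t u=0$, then $u^\top\evec_t=0$ almost surely (the MGF bound forces zero variance), so the bound holds trivially with the convention that the right side is $0$ for $r>0$.

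For the Lyapunov limit, assume $\rho(\Amat_K)<1$. By Gelfand's formula there exist $c<\infty$ and $\gamma\in(\rho(\Amat_K),1)$ with $\opnorm{\Amat_K^s}\le c\gamma^s$. Hence the terms of $S_t$ are bounded in norm by $c^2\gamma^{2s}\opnorm{\Bmat_K\Sigma_K^{\mathrm{roll}}\Bmat_K^\top}$, and the monotone PSD partial sums $S_t$ converge absolutely to a PSD limit $S_\infty$. The identity
\begin{equation*}
S_{t+1}=\Amat_K S_t\Amat_K^\top+\Bmat_K\Sigma_K^{\mathrm{roll}}\Bmat_K^\top
\end{equation*}
(reindex $s\mapsto s+1$) passes to the limit, giving~\eqref{eq:lyap}. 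For uniqueness, suppose $S,S'$ both solve~\eqref{eq:lyap}. Their difference satisfies $D=\Amat_K D\Amat_K^\top$, so $D=\Amat_K^k D(\Amat_K^\top)^k\to0$ and hence $D=0$; uniqueness holds even without the PSD restriction. Continuity of $S\mapsto\Cmat S\Cmat^\top$ then gives $X_\infty(K)=\Cmat S_\infty(K)\Cmat^\top$.

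The main obstacle is bookkeeping rather than depth. The dimension $m$ in~\eqref{eq:subgauss} must match the action dimension $n$ so that $\lambda_s\in\reals^m$ is valid. The other step needing care is the use of independence to factorize the MGF when combining the sub-Gaussian bounds. The stability step is standard once the geometric bound on $\Amat_K^s$ is in place.
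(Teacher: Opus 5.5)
Your proposal is correct and follows the same route as the paper's proof. You unroll the recursion, set $\lambda_s=\Bmat_K^\top(\Amat_K^\top)^s\Cmat^\top u$ (the paper's $h_s$), factorize the MGF by independence, apply the per-step bound and the Chernoff method, and then use Schur stability for the Lyapunov limit. You also spell out details the paper leaves implicit: the optimized Chernoff exponent, the degenerate case $u^\top X_t(K)u=0$, the geometric bound from Gelfand's formula together with the recursion $S_{t+1}=\Amat_K S_t\Amat_K^\top+\Bmat_K\Sigma_K^{\mathrm{roll}}\Bmat_K^\top$, and uniqueness via $D=\Amat_K^k D(\Amat_K^\top)^k\to 0$.
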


\begin{proof}
Unrolling~\eqref{eq:dynamics} from~$\xvec_0=0$ gives $\evec_t = \sum_{s=0}^{t-1}\Cmat\Amat_K^s\Bmat_K\xivec_{t-1-s}$. For $u\in\reals^n$ and $h_s := \Bmat_K^\top(\Amat_K^\top)^s\Cmat^\top u$, we have $u^\top\evec_t = \sum_s h_s^\top\xivec_{t-1-s}$. By independence and the sub-Gaussian MGF bound,
\begin{align}
\expect\!\left[e^{u^\top\evec_t}\right]
&\le \prod_s \exp\!\left(\tfrac{1}{2}h_s^\top\Sigma_K^{\mathrm{roll}}h_s\right)\nonumber\\
&= \exp\!\left(\tfrac{1}{2}u^\top X_t(K)u\right).
\end{align}
The tail bound follows from the Chernoff method, and Schur stability with the discrete Lyapunov theorem yields convergence to $X_\infty(K)$.
\end{proof}

\begin{corollary}[Euclidean tail bound]\label{cor:euclidean}
Under Theorem~\ref{thm:subgaussian},
\begin{equation}\label{eq:euclidean-tail}
\prob\!\left(\twonorm{\evec_t}\ge r\right) \le 2n\,\exp\!\left(-\frac{r^2}{2n\,\lambda_{\max}(X_t(K))}\right).
\end{equation}
\end{corollary}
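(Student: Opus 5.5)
The plan is to reduce the Euclidean event to $n$ scalar events along the coordinate axes and apply the directional tail bound~\eqref{eq:et-tail} from Theorem~\ref{thm:subgaussian} to each, followed by a union bound.

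\textbf{Step 1: reduce to one coordinate.} If $\twonorm{\evec_t}\ge r$, then $\sum_{i=1}^n (e_i^\top\evec_t)^2\ge r^2$. Hence at least one coordinate satisfies $(e_i^\top\evec_t)^2\ge r^2/n$, that is, $|e_i^\top\evec_t|\ge r/\sqrt n$. This gives the inclusion
\[
\{\twonorm{\evec_t}\ge r\}\subseteq\bigcup_{i=1}^n\{|e_i^\top\evec_t|\ge r/\sqrt n\}.
\]

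\textbf{Step 2: bound each coordinate event.} Apply~\eqref{eq:et-tail} with $u=e_i$ and threshold $r/\sqrt n$. This yields
\[
\prob(|e_i^\top\evec_t|\ge r/\sqrt n)\le 2\exp\!\left(-\frac{r^2}{2n\,e_i^\top X_t(K)e_i}\right).
\]
Since $X_t(K)$ is PSD, $e_i^\top X_t(K)e_i\le\lambda_{\max}(X_t(K))$. The exponent is increasing in this quadratic form, so we may replace it by $\lambda_{\max}(X_t(K))$.

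\textbf{Step 3: union bound.} Summing the $n$ identical upper bounds gives the factor $2n$ in~\eqref{eq:euclidean-tail}.

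The degenerate case $\lambda_{\max}(X_t(K))=0$ has to be checked separately. There $\evec_t=0$ almost surely, because a zero sub-Gaussian proxy forces zero variance. The probability is then $0$ for $r>0$, consistent with reading the right-hand side as $0$. Otherwise the argument is routine. The only step needing care is the $1/\sqrt n$ threshold split, which is exactly what produces the $n$ in the exponent's denominator.
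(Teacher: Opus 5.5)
Your argument is correct and matches the paper's proof. Both arguments pigeonhole the event $\|e_t\|_2\ge r$ onto some coordinate exceeding $r/\sqrt{n}$, bound that coordinate's tail using a sub-Gaussian proxy at most $\lambda_{\max}(X_t(K))$, and union-bound over the $n$ coordinates. The only difference is that the paper works in the eigenbasis of $X_t(K)$, with coordinates $y=Q^\top e_t$ and proxies $\lambda_i$, rather than the standard basis; this is immaterial because $u^\top X_t(K)u\le\lambda_{\max}(X_t(K))$ for every unit vector $u$.
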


\begin{proof}
Diagonalize $X_t = Q\Lambda Q^\top$ and set $y = Q^\top\evec_t$. If $\twonorm{y}\ge r$ then some $|y_i|\ge r/\sqrt n$, and each $y_i$ is sub-Gaussian with proxy $\lambda_i\le\lambda_{\max}(X_t)$. A union bound over coordinates gives the result.
\end{proof}

The dimensional factor~$n$ enters twice, once as a linear prefactor and once inside the denominator of the exponent. In the scalar case $n=1$ both factors disappear and the bound reduces to the one-dimensional Chernoff tail. For robot manipulators $n$ is typically between six and twelve, and the corresponding slack is mild on a logarithmic scale.

\subsection{Closed-Loop Failure Bound}

\begin{theorem}[Failure bound via validation loss]\label{thm:failure}
Under the framework of Section~\ref{sec:problem}, with probability at least~$1 - \delta$ over training randomness,
\begin{multline}\label{eq:fail-bound}
\prob\!\left(\mathrm{Fail}_T \mid \hat\pi_K\right) \le \\
2n(T{+}1)\exp\!\left(-\frac{r^2}{2n\,\Gamma_T(K)\!\left(\hat L_{\mathrm{va}}(K) + \varepsilon_{\mathrm{gen}}(K,\delta)\right)}\right).
\end{multline}
\end{theorem}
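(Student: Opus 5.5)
The plan is to combine a union bound over time, the Euclidean tail bound of Corollary~\ref{cor:euclidean}, a spectral bound of $\lambda_{\max}(X_t(K))$ by $\Gamma_T(K)$ times the trace of the proxy covariance, and finally the generalization event~\eqref{eq:gen}. Fix a trained policy $\hat\pi_K$ and work conditionally on it, so that the rollout errors $\{\xivec_t\}$ satisfy~\eqref{eq:subgauss} with proxy $\Sigma_K^{\mathrm{roll}}$. By definition~\eqref{eq:fail},
\[
\prob(\mathrm{Fail}_T\mid\hat\pi_K)\le\sum_{t=0}^{T}\prob\!\left(\twonorm{\evec_t}\ge r\mid\hat\pi_K\right),
\]
which produces the factor $T{+}1$. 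The $t=0$ term vanishes because $\xvec_0=0$ (equivalently $X_0=0$). For every other $t$, Corollary~\ref{cor:euclidean} gives the bound $2n\exp\!\left(-r^2/(2n\lambda_{\max}(X_t(K)))\right)$.

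The key step is the spectral bound $\lambda_{\max}(X_t(K))\le \Gamma_T(K)\,\tr(\Sigma_K^{\mathrm{roll}})$, uniformly in $t\le T$. Write $G_s:=\Cmat\Amat_K^s\Bmat_K$, so that $X_t=\sum_{s<t}G_s\Sigma_K^{\mathrm{roll}}G_s^\top$. For a unit vector $u$,
\[
u^\top X_t u=\sum_{s<t}(G_s^\top u)^\top\Sigma_K^{\mathrm{roll}}(G_s^\top u)\le\sum_{s<t}\lambda_{\max}(\Sigma_K^{\mathrm{roll}})\,\opnorm{G_s}^2 .
\]
Using $\lambda_{\max}(\Sigma)\le\tr(\Sigma)$ for PSD $\Sigma$ and taking the maximum over $t\le T$ gives $\lambda_{\max}(X_t)\le\Gamma_T(K)\,\tr(\Sigma_K^{\mathrm{roll}})=\Gamma_T(K)\,L_{\mathrm{roll}}(K)$. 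Since the exponential bound is monotone increasing in $\lambda_{\max}$, every one of the $T{+}1$ terms is at most $2n\exp\!\left(-r^2/(2n\Gamma_T(K)L_{\mathrm{roll}}(K))\right)$.

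Finally, intersect with the event of probability at least $1-\delta$ over training randomness on which~\eqref{eq:gen} holds. On that event, $L_{\mathrm{roll}}(K)\le\hat L_{\mathrm{va}}(K)+\varepsilon_{\mathrm{gen}}(K,\delta)$. Substituting this into the exponent, again using monotonicity, yields~\eqref{eq:fail-bound}.

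The main obstacle is conceptual rather than computational: the probability must be split cleanly into two layers. The inner probability is over rollout noise conditional on $\hat\pi_K$, where the sub-Gaussian model with a fixed $\Sigma_K^{\mathrm{roll}}$ applies. The outer probability is over the training and validation draw, where~\eqref{eq:gen} applies. I would handle this by stating that the assumption~\eqref{eq:subgauss} is imposed conditionally on the learned policy, and that $L_{\mathrm{roll}}(K)$ is the corresponding conditional quantity. With that in place, the two probabilities compose without any further union bound over $\delta$. A minor point is degenerate cases such as $\Sigma_K^{\mathrm{roll}}=0$ or $\Gamma_T=0$. In those cases the right-hand side is interpreted as $0$, which is consistent because $\evec_t\equiv0$ almost surely.
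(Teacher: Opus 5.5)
Your proposal is correct and follows the paper's proof essentially step for step. You bound $\lambda_{\max}(X_t(K))\le\Gamma_T(K)\,\lambda_{\max}(\Sigma_K^{\mathrm{roll}})\le\Gamma_T(K)\,L_{\mathrm{roll}}(K)$, apply Corollary~\ref{cor:euclidean} with a union bound over $t=0,\dots,T$, and substitute the generalization bound~\eqref{eq:gen} on its probability-$(1-\delta)$ event; your explicit split between rollout noise conditional on $\hat\pi_K$ and training randomness makes precise what the paper leaves implicit, but it is not a different route.
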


\begin{proof}
From Theorem~\ref{thm:subgaussian},
$\lambda_{\max}(X_t(K)) \le \sum_s\opnorm{\Cmat\Amat_K^s\Bmat_K}^2\,\lambda_{\max}(\Sigma_K^{\mathrm{roll}}) \le \Gamma_T(K)\,L_{\mathrm{roll}}(K)$, where the second inequality uses $\lambda_{\max}(\Sigma_K^{\mathrm{roll}})\le\tr(\Sigma_K^{\mathrm{roll}}) = L_{\mathrm{roll}}(K)$. The generalization bound~\eqref{eq:gen} replaces $L_{\mathrm{roll}}(K)$ by $\hat L_{\mathrm{va}}(K)+\varepsilon_{\mathrm{gen}}(K,\delta)$ with probability~$1-\delta$. Substituting into~\eqref{eq:euclidean-tail} and union-bounding over $t=0,1,\dots,T$ yields~\eqref{eq:fail-bound}.
\end{proof}

The exponent of~\eqref{eq:fail-bound} is governed by the product $\Gamma_T(K)\,(\hat L_{\mathrm{va}}(K)+\varepsilon_{\mathrm{gen}}(K,\delta))$ rather than by the loss alone, so a gain $K_1$ with lower training or validation loss than $K_2$ can incur a strictly worse failure bound once its amplification is correspondingly larger. This matches the attenuation-difficulty tradeoff of~\cite{bronars2026tune}, in which compliant-overdamped gains attain higher closed-loop success despite higher validation MSE because their amplification is small enough to dominate the product.

\subsection{Scalar Ordering Reduction}

To compare gain regimes through a single scalar, we impose structural conditions on the error dynamics that yield an \emph{upper bound} on the proxy matrix. Every inequality below is an upper bound, and reversing any one invalidates the chain.

\begin{definition}[Shape-preserving upper-bound structure]\label{def:shape}
The error dynamics~\eqref{eq:dynamics} have \emph{shape-preserving upper-bound structure} if there exist fixed PSD matrices $\bar W,\bar\Sigma\succeq 0$ and scalar functions $l(K),b(K)>0$, $\rho_*(K)\in[0,1)$ such that
\begin{align*}
&\text{(i)\,label difficulty:}      &\Sigma_K^{\mathrm{roll}} &\psd l(K)\,\bar\Sigma,\\
&\text{(ii)\,injection:}             &\Bmat_K\bar\Sigma\Bmat_K^\top &\psd b(K)\,\bar W,\\
&\text{(iii)\,Lyapunov contraction:} &\Amat_K\bar W\Amat_K^\top &\psd \rho_*(K)^2\,\bar W.
\end{align*}
\end{definition}

Condition~(i) bounds the rollout-error proxy at gain~$K$, condition~(ii) bounds how strongly action errors are injected into the state space, and condition~(iii) certifies $\bar W$ as a Lyapunov function for $\Amat_K$ with rate $\rho_*(K)^2$.

\begin{theorem}[Scalar ordering reduction]\label{thm:scalar}
Under Definition~\ref{def:shape}, define the \emph{ordering index}
\begin{equation}\label{eq:psi}
\Psi(K) := \frac{b(K)\, l(K)}{1 - \rho_*(K)^2}.
\end{equation}
Then
\begin{equation}\label{eq:Xinfty-bound}
X_\infty(K) \psd \Psi(K)\, \bar X, \qquad \bar X := \Cmat\, \bar W\, \Cmat^\top.
\end{equation}
Consequently, $\lambda_{\max}(X_\infty(K)) \le \Psi(K)\,\lambda_{\max}(\bar X)$. If~$\Psi(K_1) \le \Psi(K_2)$, then~$K_1$'s proxy upper bound is no larger than~$K_2$'s, and substituting into~\eqref{eq:fail-bound} yields a failure-probability upper bound for~$K_1$ that is no larger than the corresponding upper bound for~$K_2$.
\end{theorem}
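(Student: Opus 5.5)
The plan is to bound the Lyapunov series term by term, using the three conditions of Definition~\ref{def:shape} in sequence, and then sum a geometric series. By Theorem~\ref{thm:subgaussian}, when $\rho(\Amat_K)<1$ the stationary proxy is the convergent series
\begin{equation*}
S_\infty(K)=\sum_{s=0}^{\infty}\Amat_K^s\Bmat_K\Sigma_K^{\mathrm{roll}}\Bmat_K^\top(\Amat_K^\top)^s .
\end{equation*}
The tool throughout is monotonicity of congruence: if $P\psd Q$ then $GPG^\top\psd GQG^\top$ for any matrix $G$.

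First, condition~(i) and congruence by $\Bmat_K$ give $\Bmat_K\Sigma_K^{\mathrm{roll}}\Bmat_K^\top\psd l(K)\,\Bmat_K\bar\Sigma\Bmat_K^\top$. Condition~(ii) then bounds this by $l(K)b(K)\,\bar W$. Second, I will prove by induction on $s$ that $\Amat_K^s\bar W(\Amat_K^\top)^s\psd\rho_*(K)^{2s}\bar W$. The case $s=0$ is trivial. For the inductive step, congruence by $\Amat_K$ applied to the hypothesis gives $\Amat_K^{s+1}\bar W(\Amat_K^\top)^{s+1}\psd\rho_*^{2s}\Amat_K\bar W\Amat_K^\top$, and condition~(iii) bounds the right side by $\rho_*^{2s+2}\bar W$. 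Applying congruence by $\Amat_K^s$ to the first bound and combining gives each summand $\psd b(K)l(K)\rho_*(K)^{2s}\bar W$.

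Summing, and using that the Loewner order is preserved under nonnegative sums and limits because the PSD cone is closed, gives
\begin{equation*}
S_\infty(K)\psd \frac{b(K)l(K)}{1-\rho_*(K)^2}\bar W=\Psi(K)\bar W,
\end{equation*}
since $\rho_*<1$. The same bound holds for every finite $S_t(K)$. One more congruence by $\Cmat$ yields~\eqref{eq:Xinfty-bound}.

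The eigenvalue consequence follows because $P\psd Q$ implies $\lambda_{\max}(P)\le\lambda_{\max}(Q)$ for symmetric matrices, by the Rayleigh quotient, and $\lambda_{\max}(\Psi\bar X)=\Psi\lambda_{\max}(\bar X)$ since $\Psi>0$. The ordering claim is then immediate: the bounds $\Psi(K_i)\lambda_{\max}(\bar X)$ share the fixed factor $\lambda_{\max}(\bar X)$. Substituting the finite-$t$ version $\lambda_{\max}(X_t(K))\le\Psi(K)\lambda_{\max}(\bar X)$ into Corollary~\ref{cor:euclidean} and union-bounding over $t\le T$, as in Theorem~\ref{thm:failure}, gives a failure bound that is monotone in $\Psi$. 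This works because the exponential expression is increasing in the proxy level.

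The main subtlety is the last step rather than the matrix chain. Condition~(i) already absorbs $\Sigma_K^{\mathrm{roll}}$, so the comparison should be stated with $\Psi(K)\lambda_{\max}(\bar X)$ in place of $\Gamma_T(K)L_{\mathrm{roll}}(K)$. I would therefore note explicitly that the claim concerns these upper bounds only, not the true failure probabilities. I also need to ensure that $\rho(\Amat_K)<1$ holds so that $S_\infty$ exists. Condition~(iii) with $\rho_*<1$ essentially certifies this on the range of $\bar W$. In any case, working with finite $S_t$ sidesteps the issue.
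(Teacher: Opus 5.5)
Your proposal is correct and follows the paper's proof essentially step for step: congruence by $\Bmat_K$ with (i) and (ii), iterating (iii) to get $\Amat_K^s\bar W(\Amat_K^\top)^s\psd\rho_*(K)^{2s}\bar W$, summing the geometric series, projecting through $\Cmat$, and invoking monotonicity of $\lambda_{\max}$. Your use of the finite-$t$ bound $X_t(K)\psd\Psi(K)\bar X$ in Corollary~\ref{cor:euclidean}, followed by a union bound over $t\le T$, is slightly more careful than the paper's brief ``substitution into~\eqref{eq:fail-bound}''. That extra care matters because the paper's bound is stated in terms of $\Gamma_T(K)$ rather than $\lambda_{\max}(X_t(K))$.
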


\begin{proof}
By~(i), left- and right-multiplying $\Sigma_K^{\mathrm{roll}}\psd l(K)\bar\Sigma$ by $\Bmat_K$ and its transpose and applying~(ii) gives $\Bmat_K\Sigma_K^{\mathrm{roll}}\Bmat_K^\top \psd l(K)b(K)\,\bar W$. The map $X\mapsto\Amat_K^s X(\Amat_K^\top)^s$ preserves $\psd$ for $s\ge 0$, so $\Amat_K^s\Bmat_K\Sigma_K^{\mathrm{roll}}\Bmat_K^\top(\Amat_K^\top)^s\psd l(K)b(K)\Amat_K^s\bar W(\Amat_K^\top)^s$. Iterating~(iii) yields $\Amat_K^s\bar W(\Amat_K^\top)^s\psd\rho_*(K)^{2s}\bar W$, and the geometric sum gives $S_\infty(K)\psd l(K)b(K)\sum_{s\ge 0}\rho_*(K)^{2s}\bar W = \Psi(K)\bar W$. Projecting through $\Cmat$ yields~\eqref{eq:Xinfty-bound}, and operator monotonicity of $\lambda_{\max}$ followed by substitution into~\eqref{eq:fail-bound} gives the failure-bound monotonicity.
\end{proof}

The index $\Psi(K)$ assembles three independent penalties: a large label difficulty $l(K)$, a large injection $b(K)$, and slow contraction (large $\rho_*(K)$). A gain can remain optimal even when its label difficulty is the highest of the four candidates, provided the injection and contraction terms are correspondingly smaller. This three-way competition is precisely what makes the BC gain-tuning problem nontrivial, and Theorem~\ref{thm:scalar} reduces the resolution of that competition to a single scalar comparison. Importantly, the three penalties are not symmetric in the gains: the injection $b(K)$ is typically a strictly increasing function of $K_p$ (stiffer gains inject action errors more aggressively), while the contraction $\rho_*(K)$ is typically a strictly decreasing function of $K_d$ (stronger damping tightens the Lyapunov rate). The label-difficulty term $l(K)$ is the only component with a qualitatively data-driven behavior: harder labels are induced by compliant settings because the policy must compensate for richer closed-loop trajectories.

\begin{corollary}[Finite-horizon convergence]\label{cor:finite}
Under Definition~\ref{def:shape}, the truncation error of the proxy admits the upper bound
\begin{equation}\label{eq:proxy-conv}
X_\infty(K) - X_t(K) \psd \frac{\rho_*(K)^{2t}}{1-\rho_*(K)^2}\,b(K)\,l(K)\,\bar X,
\end{equation}
and consequently $\opnorm{X_\infty(K) - X_t(K)} \le \rho_*(K)^{2t}\,\Psi(K)\,\opnorm{\bar X}$, so the approximation error decays geometrically at rate~$\rho_*(K)^2$ per step.
\end{corollary}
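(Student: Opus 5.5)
The plan is to write the truncation error as a tail of the series defining $S_\infty(K)$ and then bound that tail term by term, reusing the chain of inequalities from the proof of Theorem~\ref{thm:scalar}.

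\emph{Step 1: tail representation.} Since $\rho(\Amat_K)<1$, Theorem~\ref{thm:subgaussian} gives $S_\infty(K)=\sum_{s\ge 0}\Amat_K^s\Bmat_K\Sigma_K^{\mathrm{roll}}\Bmat_K^\top(\Amat_K^\top)^s$, and this series converges. Subtracting the partial sum~\eqref{eq:St} leaves
\[
S_\infty(K)-S_t(K)=\sum_{s\ge t}\Amat_K^s\Bmat_K\Sigma_K^{\mathrm{roll}}\Bmat_K^\top(\Amat_K^\top)^s .
\]
This difference is PSD.

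\emph{Step 2: termwise bound.} For each term I use conditions (i) and (ii) of Definition~\ref{def:shape}, exactly as in the proof of Theorem~\ref{thm:scalar}. This gives $\Bmat_K\Sigma_K^{\mathrm{roll}}\Bmat_K^\top\psd l(K)b(K)\bar W$. Congruence by $\Amat_K^s$ preserves the Loewner order, and iterating (iii) gives $\Amat_K^s\bar W(\Amat_K^\top)^s\psd\rho_*(K)^{2s}\bar W$. Hence each term is at most $l(K)b(K)\rho_*(K)^{2s}\bar W$.

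\emph{Step 3: summation.} Summing over $s\ge t$ uses the fact that Loewner inequalities are preserved under convergent sums, since the PSD cone is closed. The geometric tail $\sum_{s\ge t}\rho_*^{2s}=\rho_*^{2t}/(1-\rho_*^2)$ then gives
\[
S_\infty(K)-S_t(K)\psd \frac{\rho_*(K)^{2t}}{1-\rho_*(K)^2}\,b(K)\,l(K)\,\bar W .
\]
Projecting through $\Cmat(\cdot)\Cmat^\top$ preserves the order and yields~\eqref{eq:proxy-conv}.

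\emph{Step 4: operator norm.} Because $0\psd X_\infty-X_t\psd cB$ with $c\ge 0$ and $B=\bar X\succeq 0$, the operator norm of the left side equals its largest eigenvalue. Monotonicity of $\lambda_{\max}$ then gives $\opnorm{X_\infty-X_t}\le c\,\lambda_{\max}(\bar X)=c\,\opnorm{\bar X}$. Writing $c=\rho_*(K)^{2t}\Psi(K)$ completes the claim.

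\emph{Main obstacle.} There is no real obstacle. The only point needing care is the passage of Loewner inequalities to infinite sums, which follows from closedness of the PSD cone together with the convergence guaranteed by $\rho(\Amat_K)<1$ (or, alternatively, by the geometric domination itself).
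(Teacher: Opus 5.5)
Your proposal is correct and follows essentially the same route as the paper: you write $S_\infty(K)-S_t(K)$ as the tail $\sum_{s\ge t}\Amat_K^s\Bmat_K\Sigma_K^{\mathrm{roll}}\Bmat_K^\top(\Amat_K^\top)^s$, rerun the (i)--(iii) chain from Theorem~\ref{thm:scalar} termwise with the geometric tail sum, project through $\Cmat$, and use Loewner monotonicity of the operator norm on PSD matrices. Your remark that Loewner inequalities pass to the infinite sum because the PSD cone is closed makes explicit a step the paper leaves implicit.
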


\begin{proof}
$S_\infty(K) - S_t(K) = \sum_{s=t}^\infty \Amat_K^s W_K (\Amat_K^\top)^s$ with~$W_K = \Bmat_K \Sigma_K^{\mathrm{roll}} \Bmat_K^\top$. Repeating the upper-bound chain from the proof of Theorem~\ref{thm:scalar} on the tail sum gives $X_\infty(K) - X_t(K) \psd l(K)b(K) \rho_*(K)^{2t}/(1-\rho_*(K)^2)\bar X$. The operator-norm bound follows since~$\opnorm{\cdot}$ is monotone under~$\psd$ for PSD matrices.
\end{proof}

\subsection{Regime Ordering}

We parameterize gains as~$K = (\alpha, \beta)$ where~$\alpha$ denotes stiffness ($K_p$ scale) and~$\beta$ denotes damping ($K_d$ scale), and define four canonical regimes
\begin{equation}\label{eq:regimes}
\begin{aligned}
\text{CO} &= (\alpha_L, \beta_H), &\quad \text{SO} &= (\alpha_H, \beta_H), \\
\text{CU} &= (\alpha_L, \beta_L), &\quad \text{SU} &= (\alpha_H, \beta_L),
\end{aligned}
\end{equation}
with~$\alpha_H > \alpha_L$ and~$\beta_H > \beta_L$.

\begin{theorem}[Regime ordering]\label{thm:regime}
Suppose~$\Psi(\alpha, \beta)$ is non-decreasing in~$\alpha$ and non-increasing in~$\beta$. Then
\begin{align}
\Psi(\mathrm{CO}) &\le \min\!\left\{\Psi(\mathrm{SO}),\, \Psi(\mathrm{CU})\right\}, \label{eq:co-best}\\
\Psi(\mathrm{SU}) &\ge \max\!\left\{\Psi(\mathrm{SO}),\, \Psi(\mathrm{CU})\right\}, \label{eq:su-worst}
\end{align}
so CO admits the tightest failure-bound upper bound and SU the loosest. The SO vs.\ CU comparison reduces to $\Psi(\alpha_H,\beta_H) \lessgtr \Psi(\alpha_L,\beta_L)$ and is therefore system-dependent.
\end{theorem}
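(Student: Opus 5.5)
The argument uses only the two monotonicity hypotheses on $\Psi(\alpha,\beta)$ together with the strict inequalities $\alpha_H>\alpha_L$ and $\beta_H>\beta_L$ from~\eqref{eq:regimes}. First, to obtain~\eqref{eq:co-best}, I compare CO with each of its two neighbours by changing one coordinate at a time:
\begin{itemize}
\item CO and SO share the damping $\beta_H$. Since $\Psi$ is non-decreasing in $\alpha$, we get $\Psi(\alpha_L,\beta_H)\le\Psi(\alpha_H,\beta_H)$, i.e.\ $\Psi(\mathrm{CO})\le\Psi(\mathrm{SO})$.
\item CO and CU share the stiffness $\alpha_L$. Since $\Psi$ is non-increasing in $\beta$, we get $\Psi(\alpha_L,\beta_H)\le\Psi(\alpha_L,\beta_L)$, i.e.\ $\Psi(\mathrm{CO})\le\Psi(\mathrm{CU})$.
\end{itemize}
Taking the minimum of the two right-hand sides gives~\eqref{eq:co-best}.

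Next, for~\eqref{eq:su-worst}, I compare SU with the same two regimes in the same way:
\begin{itemize}
\item SU and SO share $\alpha_H$. Monotonicity in $\beta$ gives $\Psi(\alpha_H,\beta_L)\ge\Psi(\alpha_H,\beta_H)$.
\item SU and CU share $\beta_L$. Monotonicity in $\alpha$ gives $\Psi(\alpha_H,\beta_L)\ge\Psi(\alpha_L,\beta_L)$.
\end{itemize}
Taking the maximum gives~\eqref{eq:su-worst}.

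The interpretive claims then follow from Theorem~\ref{thm:scalar}. A smaller $\Psi$ yields a smaller proxy upper bound $\Psi(K)\lambda_{\max}(\bar X)$ and hence a no-larger failure-probability upper bound. So CO attains the tightest bound among the four regimes and SU the loosest. Here $\bar X$ is fixed across gains by Definition~\ref{def:shape}, so it plays no role in the comparison.

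For the final claim, note that SO and CU differ in both coordinates, and in opposite directions as far as the hypotheses are concerned. Moving from CU to SO raises $\alpha$, which can only increase $\Psi$, and raises $\beta$, which can only decrease it. The hypotheses therefore constrain neither sign of $\Psi(\alpha_H,\beta_H)-\Psi(\alpha_L,\beta_L)$. To show the comparison is genuinely system-dependent, I would exhibit two admissible monotone functions with opposite orderings. The candidates are $\Psi=\alpha/\beta^{\epsilon}$ with small $\epsilon$, where stiffness dominates and SO is worse than CU, and $\Psi=\alpha^{\epsilon}/\beta$, where damping dominates and SO is better than CU. Exhibiting these two examples is the only step requiring any construction, and it is the one I expect to be the mild obstacle: one may want the examples realizable through Definition~\ref{def:shape}. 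Choosing $l\equiv1$, $b(K)=\alpha$, and $\rho_*$ depending only on $\beta$ suffices for that purpose.
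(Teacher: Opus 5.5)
Your proof is correct and follows the paper's own route: both obtain \eqref{eq:co-best} and \eqref{eq:su-worst} from four one-coordinate comparisons using monotonicity in stiffness and antimonotonicity in damping, and read off the failure-bound interpretation via Theorem~\ref{thm:scalar}. Your explicit examples $\Psi=\alpha/\beta^{\epsilon}$ and $\Psi=\alpha^{\epsilon}/\beta$ go slightly beyond the paper, which only asserts that the SO--CU comparison involves opposing effects; they make the ``system-dependent'' claim rigorous under the stated monotonicity hypotheses.
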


\begin{proof}
Direct from monotonicity. $\Psi(\alpha_L,\beta_H) \le \Psi(\alpha_H,\beta_H)$ by stiffness-monotonicity, and $\Psi(\alpha_L,\beta_H) \le \Psi(\alpha_L,\beta_L)$ by damping-antimonotonicity. Similarly $\Psi(\alpha_H,\beta_H) \le \Psi(\alpha_H,\beta_L)$ and $\Psi(\alpha_L,\beta_L) \le \Psi(\alpha_H,\beta_L)$. The SO-CU comparison involves opposing effects and cannot be resolved without additional information.
\end{proof}

\begin{figure}[t]
\centering
\includegraphics[width=0.7\columnwidth]{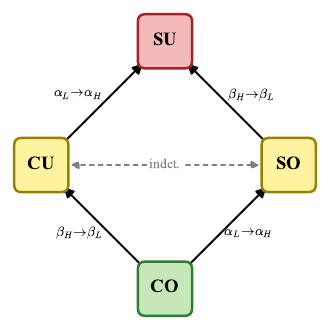}
\caption{Hasse diagram of the regime ordering established by Theorem~\ref{thm:regime}. Solid arrows indicate $\Psi(K_1) \le \Psi(K_2)$. CO is the unique minimum and SU the unique maximum. The dashed line between SO and CU indicates that their ordering is system-dependent.}
\label{fig:hasse}
\end{figure}

\subsection{Verification for Canonical Second-Order Systems}\label{sec:canonical}

To specialize Theorem~\ref{thm:regime}, it suffices to verify one canonical fact: the gain-dependent amplification increases with stiffness and decreases with damping. For the scalar second-order PD system, this fact can be checked in closed form.

\begin{lemma}[Canonical stationary covariance]\label{lem:cont-cov}
Consider the scalar ($n=1$) continuous-time error dynamics~\eqref{eq:cont-state} with mass~$m>0$, gains $K_p=\alpha>0$ and $K_d=\beta>0$, and white action-error intensity~$\sigma^2$. The stationary covariance of~$\xvec_t$ is the unique PSD solution of
\begin{equation*}
\Amat_K^{\mathrm{c}}P + P(\Amat_K^{\mathrm{c}})^\top + \Bmat_K^{\mathrm{c}}\sigma^2(\Bmat_K^{\mathrm{c}})^\top = 0,
\end{equation*}
and is given by
\begin{equation}\label{eq:Pc-cont}
P^{\mathrm{c}}(\alpha,\beta) = \begin{bmatrix} \dfrac{\sigma^2\alpha}{2\beta} & 0 \\ 0 & \dfrac{\sigma^2\alpha^2}{2\beta\,m} \end{bmatrix}.
\end{equation}
Thus the stationary position-error variance is
\begin{equation}\label{eq:Xc-cont}
X_\infty^{\mathrm{c}}(\alpha,\beta) = \Cmat P^{\mathrm{c}}\Cmat^\top = \frac{\sigma^2\alpha}{2\beta}.
\end{equation}
\end{lemma}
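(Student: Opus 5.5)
The plan is to solve the continuous Lyapunov equation entrywise and then appeal to the standard stability argument for existence and uniqueness.

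\textbf{Setup.} In the scalar case $n=1$ we have
\[
\Amat_K^{\mathrm{c}}=\begin{bsmallmatrix}0&1\\-\alpha/m&-\beta/m\end{bsmallmatrix},\qquad \Bmat_K^{\mathrm{c}}=\begin{bsmallmatrix}0\\ \alpha/m\end{bsmallmatrix}.
\]
The characteristic polynomial is $m\lambda^2+\beta\lambda+\alpha$. Since $\alpha,\beta,m>0$, Routh--Hurwitz gives that $\Amat_K^{\mathrm{c}}$ is Hurwitz. Consequently the Lyapunov operator $P\mapsto \Amat_K^{\mathrm{c}}P+P(\Amat_K^{\mathrm{c}})^\top$ is invertible, because no two eigenvalues sum to zero. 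So the equation has a unique solution, namely
\[
P=\int_0^\infty e^{\Amat_K^{\mathrm{c}}s}\Bmat_K^{\mathrm{c}}\sigma^2(\Bmat_K^{\mathrm{c}})^\top e^{(\Amat_K^{\mathrm{c}})^\top s}\,ds\succeq 0 .
\]
This integral is also the stationary covariance of the Ornstein--Uhlenbeck process driven by white noise of intensity $\sigma^2$. This is the continuous analogue of the Lyapunov limit in Theorem~\ref{thm:subgaussian}.

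\textbf{Solving the equation.} Write $P=\begin{bsmallmatrix}p_{11}&p_{12}\\p_{12}&p_{22}\end{bsmallmatrix}$ and expand the three independent entries of the symmetric equation.
\begin{itemize}
\item The $(1,1)$ entry gives $2p_{12}=0$, so $p_{12}=0$.
\item The $(1,2)$ entry gives $p_{22}-(\alpha/m)p_{11}-(\beta/m)p_{12}=0$. With $p_{12}=0$ this yields $p_{22}=(\alpha/m)p_{11}$.
\item The $(2,2)$ entry gives $-2(\alpha/m)p_{12}-2(\beta/m)p_{22}+\sigma^2\alpha^2/m^2=0$. With $p_{12}=0$ this yields $p_{22}=\sigma^2\alpha^2/(2\beta m)$.
\end{itemize}
Back-substituting gives $p_{11}=\sigma^2\alpha/(2\beta)$. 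This is diagonal with positive entries, so it is PSD, and by uniqueness it is the solution. Projecting with $\Cmat=[1,0]$ gives $X_\infty^{\mathrm{c}}=\sigma^2\alpha/(2\beta)$.

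\textbf{Main obstacle.} The algebra is routine. The only care needed is bookkeeping: the injection carries the factor $\alpha/m$, so the noise term is $\sigma^2\alpha^2/m^2$. One must track the factors of $m$ correctly for the mass to cancel in $p_{11}$.

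As a sanity check, the stationary energy balance $\alpha\,p_{11}=m\,p_{22}$ holds, i.e. equipartition between the potential and kinetic terms. This confirms the result.
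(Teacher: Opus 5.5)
Your proposal is correct and follows the paper's proof essentially verbatim: the same entrywise solution of the Lyapunov equation ($p_{12}=0$, $p_{22}=(\alpha/m)p_{11}$, $2(\beta/m)p_{22}=\sigma^2\alpha^2/m^2$), with uniqueness following from $\Amat_K^{\mathrm{c}}$ being Hurwitz. Your extra justification through the spectrum of the Lyapunov operator and the integral (Ornstein--Uhlenbeck) representation only spells out what the paper asserts in one line.
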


\begin{proof}
With $A_c = \begin{bsmallmatrix}0 & 1 \\ -\alpha/m & -\beta/m\end{bsmallmatrix}$, $B_c=(\alpha/m)e_2$, and $P=\begin{bsmallmatrix}p_{11}&p_{12}\\p_{12}&p_{22}\end{bsmallmatrix}$, the Lyapunov equation gives $2p_{12}=0$, $p_{22}=(\alpha/m)p_{11}$, and $2(\beta/m)p_{22}=(\alpha/m)^2\sigma^2$. Solving yields~\eqref{eq:Pc-cont}. Since $\alpha,\beta>0$ make $A_c$ Hurwitz, this PSD solution is unique.
\end{proof}

\begin{theorem}[Global monotonicity in canonical systems]\label{thm:canonical}
The canonical stationary variance $X_\infty^{\mathrm{c}}(\alpha,\beta)=\sigma^2\alpha/(2\beta)$ is strictly increasing in stiffness~$\alpha$ and strictly decreasing in damping~$\beta$ on the entire stable orthant $\{(\alpha,\beta):\alpha,\beta>0\}$, covering both underdamped ($\beta^2<4m\alpha$) and overdamped ($\beta^2\ge 4m\alpha$) regimes. Consequently, Theorem~\ref{thm:regime} holds for the canonical system with $\Psi^{\mathrm{c}}(\alpha,\beta):=X_\infty^{\mathrm{c}}(\alpha,\beta)/\bar X^{\mathrm{c}}$ for any positive scalar reference~$\bar X^{\mathrm{c}}$.
\end{theorem}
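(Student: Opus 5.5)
The proof is a short calculation on the closed form from Lemma~\ref{lem:cont-cov}, followed by an appeal to Theorem~\ref{thm:regime}.

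\textbf{Step 1 (monotonicity).} By Lemma~\ref{lem:cont-cov}, $X_\infty^{\mathrm{c}}(\alpha,\beta)=\sigma^2\alpha/(2\beta)$ holds on the whole open orthant $\alpha,\beta>0$. On this set $A_c$ is Hurwitz: its characteristic polynomial $ms^2+\beta s+\alpha$ has positive coefficients. So the formula is valid there, and it does not matter whether the discriminant $\beta^2-4m\alpha$ is negative or nonnegative. Taking partial derivatives,
\[
\partial_\alpha X_\infty^{\mathrm{c}}=\frac{\sigma^2}{2\beta}>0,\qquad \partial_\beta X_\infty^{\mathrm{c}}=-\frac{\sigma^2\alpha}{2\beta^2}<0,
\]
assuming $\sigma^2>0$; if $\sigma^2=0$ the variance is identically zero and the claim holds only weakly. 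Since the orthant is convex, strict positivity of $\partial_\alpha$ along segments with $\beta$ fixed gives strict increase in $\alpha$. Likewise, strict negativity of $\partial_\beta$ along segments with $\alpha$ fixed gives strict decrease in $\beta$. Alternatively, one can read this off directly, since the expression is linear in $\alpha$ and proportional to $1/\beta$. Because the closed form is a single analytic expression, the underdamped/overdamped boundary $\beta^2=4m\alpha$ plays no role and there is no case split.

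\textbf{Step 2 (transfer to the regime ordering).} Fix any $\bar X^{\mathrm{c}}>0$ and set $\Psi^{\mathrm{c}}=X_\infty^{\mathrm{c}}/\bar X^{\mathrm{c}}$. Dividing by a positive constant preserves strict monotonicity. Hence $\Psi^{\mathrm{c}}$ is strictly non-decreasing in $\alpha$ and strictly non-increasing in $\beta$, which are exactly the hypotheses of Theorem~\ref{thm:regime}. Applying that theorem to the four corners $(\alpha_L,\beta_H)$, $(\alpha_H,\beta_H)$, $(\alpha_L,\beta_L)$, $(\alpha_H,\beta_L)$ yields \eqref{eq:co-best} and \eqref{eq:su-worst}, and in fact with strict inequalities. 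The SO--CU comparison then reduces to $\alpha_H/\beta_H\lessgtr\alpha_L/\beta_L$, which confirms that it is system-dependent.

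\textbf{Main obstacle.} The calculus itself is trivial. The care needed is in what "Theorem~\ref{thm:regime} holds" means here. That theorem is phrased in terms of an index $\Psi$, not necessarily one produced by Definition~\ref{def:shape}, and its proof uses only monotonicity. So I would argue that the conclusion applies verbatim to $\Psi^{\mathrm{c}}$ and need not construct $\bar W,\bar\Sigma$. If a tie to Definition~\ref{def:shape} were wanted, I would first try the following choice:
\[
l\equiv\sigma^2,\quad \bar W=P^{\mathrm{c}}(\alpha,\beta)/\bigl(\sigma^2\Psi\bigr).
\]
However, $\bar W$ must be gain-independent, and $P^{\mathrm{c}}$ has a $(2,2)$ entry $\propto\alpha^2/\beta$ that scales differently from the $(1,1)$ entry. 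For that reason I would keep the statement at the level of the exact stationary variance and not claim the shape-preserving structure.
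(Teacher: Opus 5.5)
Your proposal is correct and follows the paper's proof. You differentiate the closed form $\sigma^2\alpha/(2\beta)$ from Lemma~\ref{lem:cont-cov}, observe that stability of the scalar second-order system is exactly $\alpha,\beta>0$ so no underdamped/overdamped split is needed, and then apply Theorem~\ref{thm:regime}, whose proof uses only monotonicity. Your additions are sound: the $\sigma^2>0$ caveat the paper leaves implicit, the strictness of the corner inequalities, and declining to claim Definition~\ref{def:shape} structure because $P^{\mathrm{c}}$ does not scale uniformly in $\alpha$. The one slip is wording: ``strictly non-decreasing'' should read ``strictly increasing, hence non-decreasing''.
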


\begin{proof}
By~\eqref{eq:Xc-cont},
\[
\partial_\alpha X_\infty^{\mathrm{c}} = \frac{\sigma^2}{2\beta}>0,
\qquad
\partial_\beta X_\infty^{\mathrm{c}} = -\frac{\sigma^2\alpha}{2\beta^2}<0.
\]
The stability condition for the scalar second-order system is exactly $\alpha,\beta>0$, so no underdamped/overdamped case split is required.
\end{proof}

Equation~\eqref{eq:Xc-cont} is the continuous-time stationary-variance term used by the attenuation explanation of~\cite{bronars2026tune}. Here it serves as the canonical instance of the amplification proxy in Theorem~\ref{thm:failure}. After normalization, $\Psi^{\mathrm{c}}\propto\alpha/\beta$, so CO is the canonical minimum, SU the canonical maximum, and SO versus CU is decided by the corresponding stiffness--damping ratios.

\begin{proposition}[ZOH leading-order consistency]\label{prop:zoh-inherit}
Let $X_\infty^{\mathrm{d}}(\alpha,\beta,\Delta t)$ denote the stationary position-error proxy of the exact ZOH discretization~\eqref{eq:zoh}. On any compact gain set contained in $\{(\alpha,\beta):\alpha,\beta>0\}$,
\begin{equation}\label{eq:zoh-leading}
\begin{aligned}
X_\infty^{\mathrm{d}}(\alpha,\beta,\Delta t)
&= \Delta t\,X_\infty^{\mathrm{c}}(\alpha,\beta) + O(\Delta t^2) \\
&= \Delta t\,\frac{\sigma^2\alpha}{2\beta}+O(\Delta t^2),
\end{aligned}
\end{equation}
uniformly as $\Delta t\to0^+$. Hence, for sufficiently small control periods on that gain set, the exact ZOH proxy has the same monotonicity in~$\alpha$ and~$\beta$ as the continuous-time proxy.
\end{proposition}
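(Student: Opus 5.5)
The plan is a perturbation argument for the discrete Lyapunov equation~\eqref{eq:lyap}. We compare it, operator to operator, with the continuous Lyapunov equation of Lemma~\ref{lem:cont-cov}, and keep every remainder uniform over the compact gain set $\mathcal K\subset\{\alpha,\beta>0\}$.

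\textbf{Step 1: expand the ZOH matrices.} Take the per-step action-error proxy in~\eqref{eq:subgauss} to be $\sigma^2$. Then
\[
\Amat_K=I+\Amat_K^{\mathrm c}\Delta t+O(\Delta t^2),\qquad \Bmat_K=\Bmat_K^{\mathrm c}\Delta t+O(\Delta t^2).
\]
Since $\Amat_K^{\mathrm c},\Bmat_K^{\mathrm c}$ depend polynomially on $(\alpha,\beta)$ and $\mathcal K$ is compact, the constants in the $O(\cdot)$ terms are uniform over $\mathcal K$. Hence the injection term satisfies $W:=\Bmat_K\sigma^2\Bmat_K^\top=\Delta t^2 Q+O(\Delta t^3)$, where $Q:=\Bmat_K^{\mathrm c}\sigma^2(\Bmat_K^{\mathrm c})^\top$.

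\textbf{Step 2: rescale the Lyapunov operator.} Define linear operators on symmetric $2\times2$ matrices by
\[
\mathcal L_{\Delta t}(S):=\frac{S-\Amat_KS\Amat_K^\top}{\Delta t},\qquad \mathcal L_0(S):=-\big(\Amat_K^{\mathrm c}S+S(\Amat_K^{\mathrm c})^\top\big).
\]
Then $\mathcal L_{\Delta t}=\mathcal L_0+O(\Delta t)$ in operator norm, uniformly on $\mathcal K$.

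The operator $\mathcal L_0$ is invertible because $\Amat_K^{\mathrm c}$ is Hurwitz for $\alpha,\beta>0$, as used in Lemma~\ref{lem:cont-cov}. On the compact set $\mathcal K$ its inverse norm is bounded, since the spectrum of $\Amat_K^{\mathrm c}$ stays a positive distance from the imaginary axis. A Neumann-series argument then gives, for $\Delta t$ small enough,
\[
\|\mathcal L_{\Delta t}^{-1}-\mathcal L_0^{-1}\|=O(\Delta t)\quad\text{uniformly on }\mathcal K .
\]
The discrete Lyapunov equation reads $\mathcal L_{\Delta t}(S_\infty)=W/\Delta t=\Delta t\,Q+O(\Delta t^2)$, so
\[
S_\infty=\Delta t\,\mathcal L_0^{-1}(Q)+O(\Delta t^2)=\Delta t\,P^{\mathrm c}+O(\Delta t^2).
\]
Projecting through $\Cmat$ and invoking~\eqref{eq:Xc-cont} yields~\eqref{eq:zoh-leading}.

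\textbf{Step 3: transfer the monotonicity.} This is the main obstacle, because a uniform $O(\Delta t^2)$ bound on values does not by itself control derivatives. I would first handle the comparison actually used by Theorem~\ref{thm:regime}, which involves finitely many strictly ordered gain pairs.

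For such a pair, Theorem~\ref{thm:canonical} gives a gap $X_\infty^{\mathrm c}(K_2)-X_\infty^{\mathrm c}(K_1)\ge c>0$. The discrete gap is therefore at least $c\Delta t-O(\Delta t^2)>0$ once $\Delta t$ is small. For the pointwise statement about partial derivatives, I would use the fact that $\Amat_K$ and $\Bmat_K/\Delta t$ are entire functions of $(\alpha,\beta,\Delta t)$. The same holds for $\mathcal L_{\Delta t}$, which extends analytically to $\Delta t=0$ with value $\mathcal L_0$. Consequently $F(\alpha,\beta,\Delta t):=X_\infty^{\mathrm d}/\Delta t$ is real-analytic on a neighborhood of $\mathcal K\times\{0\}$, with $F(\cdot,\cdot,0)=X_\infty^{\mathrm c}$. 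Its partial derivatives in $\alpha$ and $\beta$ then converge uniformly on $\mathcal K$ as $\Delta t\to0^+$. The derivatives $\partial_\alpha X_\infty^{\mathrm c}=\sigma^2/(2\beta)$ and $\partial_\beta X_\infty^{\mathrm c}=-\sigma^2\alpha/(2\beta^2)$ are bounded away from zero on $\mathcal K$, so their signs persist for all sufficiently small $\Delta t$.
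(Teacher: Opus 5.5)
Your proposal is correct and follows the paper's route. You expand the ZOH matrices, use the invertibility of the continuous Lyapunov operator for Hurwitz $\Amat_K^{\mathrm{c}}$ to get $S_\infty^{\mathrm{d}}=\Delta t\,P^{\mathrm{c}}+O(\Delta t^2)$ uniformly on the compact set, and project through $\Cmat$. Your Step~3 is more careful than the paper, which only says it ``differentiates the expansion'': the joint analyticity of $X_\infty^{\mathrm{d}}/\Delta t$ up to $\Delta t=0$ is exactly what justifies that the remainder's $\alpha$- and $\beta$-derivatives are also uniformly small, so the signs of the partial derivatives carry over.
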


\begin{proof}
The ZOH matrices satisfy $\Amat_K=I+\Amat_K^{\mathrm{c}}\Delta t+O(\Delta t^2)$ and $\Bmat_K=\Delta t\,\Bmat_K^{\mathrm{c}}+O(\Delta t^2)$ uniformly on compact stable gain sets. Substituting these expansions into the discrete Lyapunov equation~\eqref{eq:lyap} shows that its solution has the form $S_\infty^{\mathrm{d}}=\Delta t\,P^{\mathrm{c}}+O(\Delta t^2)$, because the continuous-time Lyapunov operator is invertible for Hurwitz~$\Amat_K^{\mathrm{c}}$. Projection by~$\Cmat$ gives~\eqref{eq:zoh-leading}. Differentiating the expansion gives $\partial_\alpha X_\infty^{\mathrm{d}}=\Delta t\,\sigma^2/(2\beta)+O(\Delta t^2)$ and $\partial_\beta X_\infty^{\mathrm{d}}=-\Delta t\,\sigma^2\alpha/(2\beta^2)+O(\Delta t^2)$, so the signs persist for small enough~$\Delta t$ on compact gain ranges.
\end{proof}

\section{Numerical Illustration}\label{sec:illustration}

\begin{figure*}[t]
\centering
\includegraphics[width=\textwidth]{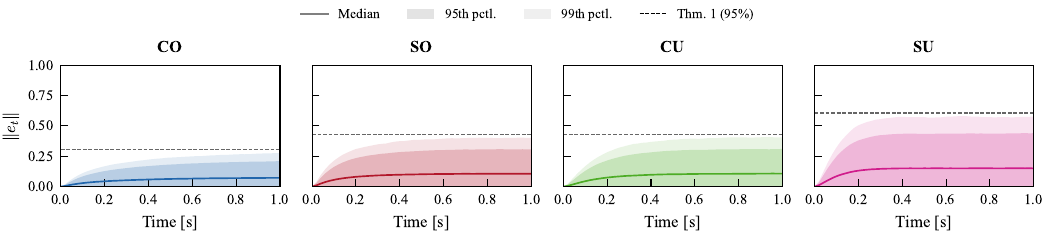}
\caption{Monte Carlo position-error envelopes ($N=50{,}000$ rollouts) for the four gain regimes under exact ZOH discretization. Shaded bands show the 95th and 99th percentiles. The dashed line is the steady-state 95th-percentile bound from Theorem~\ref{thm:subgaussian}. All panels share the same vertical scale. CO yields the tightest envelopes, confirming the predicted ordering.}
\label{fig:envelopes}
\end{figure*}

\begin{figure}[t]
\centering
\includegraphics[width=\columnwidth]{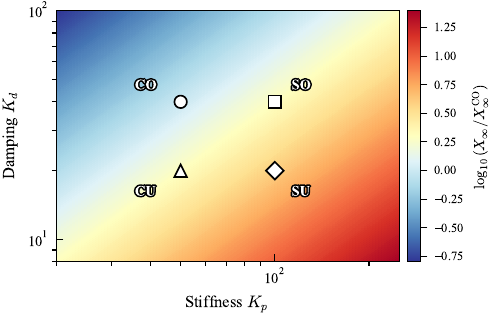}
\caption{Stationary proxy~$X_\infty(K)$ (normalized by~$X_\infty^{\mathrm{CO}}$, log scale) over the gain parameter space, computed via exact ZOH discretization at $\Delta t = 0.02$\,s. Markers locate the four canonical regimes. The landscape is monotone increasing in~$K_p$ and decreasing in~$K_d$ over the entire stable region, consistent with Theorem~\ref{thm:canonical} and the leading-order ZOH relation in Proposition~\ref{prop:zoh-inherit}.}
\label{fig:heatmap}
\end{figure}

\begin{figure}[t]
\centering
\includegraphics[width=\columnwidth]{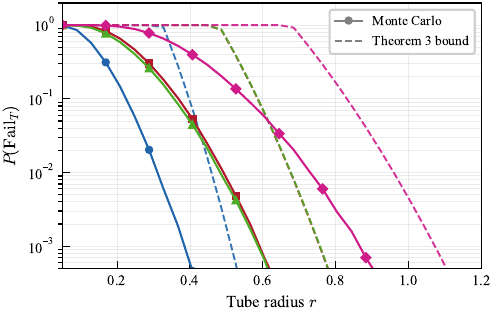}
\caption{Empirical failure rate $\hat P(\mathrm{Fail}_T)$ (solid) versus the Theorem~\ref{thm:failure} upper bound (dashed) as a function of the success-tube radius~$r$ for the four canonical regimes. The bound dominates the Monte Carlo curve in every regime, and the regime ordering CO\,$\prec$\,SO\,$\approx$\,CU\,$\prec$\,SU is preserved at all $r$.}
\label{fig:failure}
\end{figure}

We illustrate the framework on the canonical scalar second-order system, verifying sub-Gaussian propagation, the failure bound, and the regime ordering. The closed-loop matrices are obtained by exact ZOH sampling~\eqref{eq:zoh}, whose leading-order continuous-time limit is described by Proposition~\ref{prop:zoh-inherit}.

\subsection{Setup}

Consider unit mass ($m=1$\,kg) under PD control sampled at $\Delta t=0.02$\,s ($50$\,Hz). We evaluate the four canonical gain regimes of~\eqref{eq:regimes} with $\alpha_L=50$, $\alpha_H=100$, $\beta_L=20$, $\beta_H=40$. Action prediction errors are drawn i.i.d.\ from $\mathcal N(0,1)$, satisfying~\eqref{eq:subgauss} with $\Sigma_K^{\mathrm{roll}}=1$. For each regime we simulate $N=50{,}000$ independent rollouts of $T=50$ steps ($1$\,s task horizon), and compute the stationary proxy $X_\infty(K)$ from the discrete Lyapunov equation~\eqref{eq:lyap}.

\begin{table}[t]
\centering
\caption{System quantities and empirical failure rates for the scalar canonical system. $X_\infty^{\mathrm{c}}$ from~\eqref{eq:Xc-cont}, $X_\infty^{\mathrm{d}}$ from the ZOH discrete Lyapunov.}
\label{tab:illustration}
\renewcommand{\arraystretch}{1.15}
\begin{tabular}{@{}lcccc@{}}
\toprule
& \textbf{CO} & \textbf{SO} & \textbf{CU} & \textbf{SU} \\
\midrule
$K_p$ & 50 & 100 & 50 & 100 \\
$K_d$ & 40 & 40 & 20 & 20 \\
$\rho(A_K)$ & 0.974 & 0.948 & 0.943 & 0.819 \\
$X_\infty^{\mathrm{c}} = \sigma^2 K_p/(2 K_d)$ & 0.625 & 1.250 & 1.250 & 2.500 \\
$X_\infty^{\mathrm{d}}$ (ZOH discrete) & 0.012 & 0.025 & 0.025 & 0.050 \\
$X_\infty^{\mathrm{d}} / X_\infty^{\mathrm{d,CO}}$ & \textbf{1.0} & 2.0 & 2.0 & \textbf{4.0} \\
$\hat P(\mathrm{Fail}_T)$ & 0.01 & 0.26 & 0.22 & 0.75 \\
\bottomrule
\end{tabular}
\end{table}

\subsection{Sub-Gaussian Error Propagation}

Fig.~\ref{fig:envelopes} shows the position-error magnitude $\|\evec_t\|$ across regimes, with median, $95$th-percentile, and $99$th-percentile envelopes from the Monte Carlo ensemble. The dashed line is the steady-state $95$th-percentile threshold predicted by Theorem~\ref{thm:subgaussian} via $r_{95}=\sqrt{2X_\infty(K)\ln 40}$, derived from~\eqref{eq:et-tail} with $n=1$. The empirical $95$th percentile stays inside the predicted bound in every regime, confirming sub-Gaussian propagation. The error magnitudes differ systematically: CO yields steady-state errors roughly half those of SO or CU and four times smaller than SU, despite identical action-error distributions across the four regimes.

\subsection{Failure Bound and Regime Ordering}

Table~\ref{tab:illustration} reports both the closed-form continuous-time stationary variance $X_\infty^{\mathrm{c}}=\sigma^2\alpha/(2\beta)$ and the exact discrete ZOH proxy $X_\infty^{\mathrm{d}}$. Both quantities respect the predicted ordering $\mathrm{CO}\prec\mathrm{SO}\approx\mathrm{CU}\prec\mathrm{SU}$, with the near-equality of SO and CU exemplifying the system-dependent SO/CU comparison. The Monte Carlo failure rates corroborate the ordering: at $r=0.3$, CO fails on $1\%$ of rollouts, SO and CU on $26\%$ and $22\%$, and SU on $75\%$. Notably CO has a \emph{larger} spectral radius ($0.974$) than SU ($0.819$) yet achieves a fourfold smaller proxy because its injection is proportionally weaker, reflecting the three-way competition between label difficulty, injection strength, and contraction inside $\Psi(K)$.

Fig.~\ref{fig:heatmap} sweeps $X_\infty(K)$ over the $(K_p,K_d)$ parameter space. The landscape is monotone increasing in stiffness and decreasing in damping over the \emph{entire} sampled region, covering both underdamped and overdamped regimes, with CO at the global minimum and SU at the maximum, consistent with Theorem~\ref{thm:canonical}.

\subsection{Test of the Failure Bound}

Fig.~\ref{fig:failure} compares the empirical failure rate $\hat P(\mathrm{Fail}_T)$ with the upper bound of Theorem~\ref{thm:failure} as the success-tube radius~$r$ varies. For each regime, the dashed curve plots $\min\{1,\,2(T{+}1)\exp(-r^2/(2\Gamma_T(K)L_{\mathrm{roll}}(K)))\}$ from Corollary~\ref{cor:euclidean} with $L_{\mathrm{roll}}=1$ and the empirical $\Gamma_T(K)$ from~\eqref{eq:gamma}. The bound dominates the Monte Carlo curve uniformly across regimes and tube radii, with CO retaining the largest gap (well-conditioned tail) and SU sitting closest to its bound, and the relative ordering matches Theorem~\ref{thm:regime} at every~$r$. The failure-bound exponent is therefore qualitatively correct as a comparator and quantitatively conservative for absolute prediction. The sub-Gaussian assumption is the dominant source of looseness, since $L_{\mathrm{roll}}$ replaces the per-coordinate proxy by its trace and the union over $T+1$ steps adds an unconditional log-factor.

\subsection{Robosuite \texttt{PickPlaceCan} Evaluation}\label{sec:robosuite}

We use the robosuite~\cite{robosuite2020} \texttt{PickPlaceCan} task as a contact-rich check of the gain-dependent mechanism with learned policies rather than direct additive action noise. The setup uses a Panda robot, low-dimensional observations, joint-position control at $20$\,Hz, and a horizon of $150$ steps. For each gain setting $K=(k_p,\zeta)$, with surrogate damping $k_d=2\zeta\sqrt{k_p}$, the same expert transitions are relabeled by the one-step inverse model
\begin{equation}\label{eq:robosuite-relabel}
\tilde a_{K,t}
=
\argmin_a
\left\|
\begin{bmatrix}q_{t+1}\\ \dot q_{t+1}\end{bmatrix}
-
f_K(q_t,\dot q_t,a)
\right\|_2^2,
\end{equation}
where $f_K$ is the ZOH second-order joint model used in the analysis. Thus the expert state distribution is shared across regimes, and only the gain-dependent arm labels and controller gains differ. For every seed and regime we train a separate two-hidden-layer MLP behavior-cloning policy on inverse-PD joint-delta labels, using joint state, end-effector pose, gripper state, can pose, and normalized time as inputs. The policy is then deployed online in robosuite under the corresponding fixed-impedance joint-position controller. The measured peak errors and failures below come from these closed-loop neural-policy rollouts, not from injected action noise.

\begin{table}[!h]
\centering
\caption{Robosuite \texttt{PickPlaceCan} regimes and mean online trained-policy results over three seeds. Each row averages separately trained MLP deployments. Failure is measured with tube radius $r=4.6$ and $C_{\rm rs}=(\widehat S_T^{\rm pulse})^2\widehat L_{\rm cal}$.}
\label{tab:robosuite}
\scriptsize
\setlength{\tabcolsep}{3.0pt}
\renewcommand{\arraystretch}{1.12}
\begin{tabular}{@{}lcccccc@{}}
\toprule
\textbf{Regime} & $k_p$ & $\zeta$ & $\widehat L_{\rm val}$ & $C_{\rm rs}$ & $Z_K$ & \textbf{Fail} \\
\midrule
CO & 50  & 2.0 & 0.679 & $1.5{\times}10^5$ & 2.75 & 0.000 \\
SO & 200 & 2.0 & 0.382 & $1.7{\times}10^6$ & 3.12 & 0.013 \\
CU & 50  & 0.5 & 0.714 & $2.4{\times}10^6$ & 3.98 & 0.213 \\
SU & 200 & 0.5 & 0.270 & $1.4{\times}10^8$ & 6.00 & 0.893 \\
\bottomrule
\end{tabular}
\end{table}

The result data consist of one row for each seed--regime pair, together with calibration-budget curves. Each row corresponds to one trained network and its held-out online deployments. We report the expert-state validation loss $\widehat L_{\rm val}$, a rollout-local calibration loss $\widehat L_{\rm cal}$, the peak joint error $Z_K=\max_t\|q_t-q_t^\star\|_2$, and tube failure. After training, we estimate the closed-loop amplification term by applying a small one-step action pulse to the deployed policy at matched rollout states. The normalized peak deviation defines $(\widehat S_T^{\rm pulse})^2$. This pulse is a post-hoc local response measurement, not the source of the online failure data. The empirical comparator is
\begin{equation}\label{eq:robosuite-product}
C_{\rm rs}(K)
=
(\widehat S_T^{\rm pulse}(K))^2\widehat L_{\rm cal}(K).
\end{equation}

\begin{figure}[t]
\centering
\includegraphics[width=0.49\textwidth]{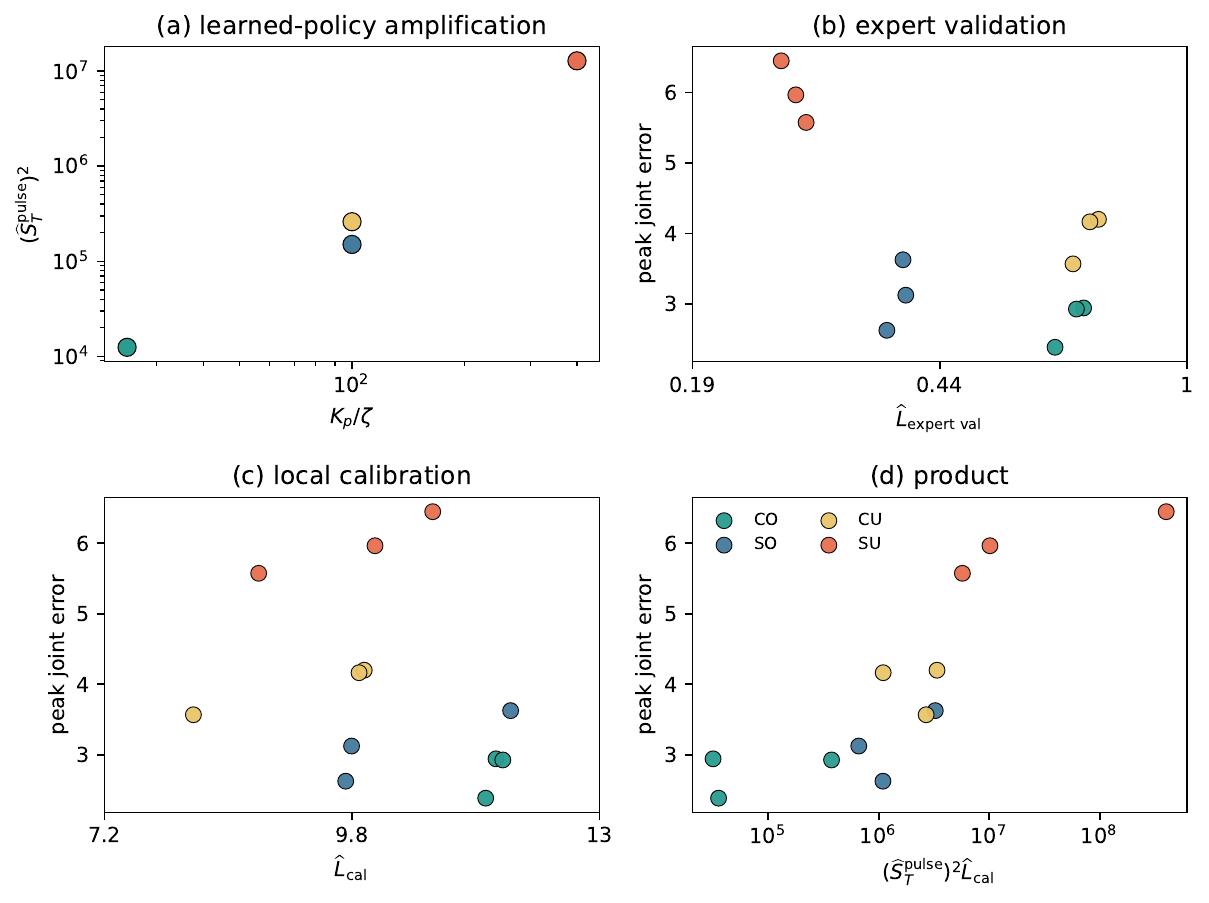}
\caption{Online robosuite \texttt{PickPlaceCan} deployments of trained MLP policies. Panel (a) shows post-training local amplification estimates, and panels (b)--(d) show seed-level peak joint error from neural-policy rollouts against expert validation loss, rollout-local calibration loss, and the product $C_{\rm rs}$. Validation loss alone favors the stiff-underdamped regime, whereas $C_{\rm rs}$ separates the high-error SU regime from the compliant-overdamped regime.}
\label{fig:robosuite_can_online}
\end{figure}

Table~\ref{tab:robosuite} and Fig.~\ref{fig:robosuite_can_online} show that expert validation loss is not a reliable closed-loop predictor for the trained neural policies: SU attains the smallest $\widehat L_{\rm val}$ but has the largest peak error and tube-failure rate, while CO has higher validation loss but no observed tube failures. The local calibration loss alone also misses the ordering, because the dominant variation comes from gain-dependent amplification. Multiplying by the post-training response proxy recovers the qualitative pattern predicted by Theorem~\ref{thm:failure}: CO is the safest regime, SU is the most fragile, and the SO--CU middle ordering is system-dependent. Since the rollouts include contact events, $C_{\rm rs}$ should be read as a ranking statistic rather than a calibrated probability.

\section{Discussion}\label{sec:discussion}

\subsection{Gain-Selection Criterion}

Theorem~\ref{thm:failure} turns the attenuation-difficulty tradeoff of~\cite{bronars2026tune} into a finite-horizon decision rule. Rather than minimizing the training or validation loss, practitioners should minimize the product $\Gamma_T(K)\cdot(\hat L_{\mathrm{va}}(K)+\varepsilon_{\mathrm{gen}}(K,\delta))$, equivalently $\Psi(K)$ when shape-preserving structure holds or $X_\infty^{\mathrm{c}}(K)$ for canonical second-order systems. Theorems~\ref{thm:regime} and~\ref{thm:canonical} recommend the compliant-overdamped regime in agreement with~\cite{bronars2026tune}. In our bound, the higher-validation-MSE yet better-closed-loop-performance pattern appears inside the product: CO incurs larger label difficulty $l(K)$ but a proportionally smaller amplification, and the exponent of~\eqref{eq:fail-bound} rewards that trade. A practical workflow follows directly. Start from compliant-overdamped gains, monitor $\Gamma_T(K)\cdot\hat L_{\mathrm{va}}(K)$ rather than the training loss alone, and resolve the SO versus CU choice empirically because the ordering is fundamentally system-dependent.

The amplification term can also be estimated empirically when an accurate linear model is unavailable. Small action pulses around matched rollout states give a local estimate of the closed-loop response, which can be multiplied by validation or calibration loss to rank gains. This empirical product should be interpreted as an ordering statistic rather than as an absolute probability, but it preserves the main lesson: compare gains after accounting for how the controller filters action residuals. The same decomposition points to interventions: improve labels or calibration, soften injection, or increase damping-driven contraction rather than relying on lower training loss alone. Since all three enter the same product, model-based proxies and empirical pulse estimates remain comparable even when the latter are used only for qualitative ranking.

\subsection{Relation to Classical Analysis}

Classical compounding-error analysis~\cite{ross2010efficient} bounds the performance gap by $O(\varepsilon T^2)$. Our amplification index refines the $T^2$ factor. For Schur-stable systems, $\Gamma_T(K) = O(1/(1-\rho(\Amat_K))^2)$ remains bounded as $T\to\infty$ and converges to the squared $\mathcal H_2$ norm $\lim_{T\to\infty}\Gamma_T(K) = \|\mathcal G_K\|_{\mathcal H_2}^2$~\cite{recht2019tour}, connecting the analysis to classical robust control. Block~et~al.~\cite{block2024butterfly} quantify how training noise compounds during autoregressive rollout, while our amplification index complements that line by exposing how the controller gains modulate the same pathway through $A_K$ and $B_K$. The indeterminacy of the SO versus CU comparison matches the system-dependent observations reported in~\cite{bronars2026tune}.

This separation is useful in gain sweeps: supervised metrics measure residual size on a chosen label distribution, whereas $\Gamma_T(K)$ measures the closed-loop filter applied to those residuals. Two policies with similar validation loss can therefore have different task-level tails, and two gains with different validation loss can be correctly ranked only after including amplification.

\subsection{Extensions and Limitations}

The linearization about the expert trajectory is the main simplification. Nonlinear, contact-rich settings are a natural next step via contraction theory~\cite{boffi2021regret}. The independence assumption on $\xivec_t$ weakens once covariate shift accumulates, and in that regime the bounds remain valid after replacing $\Sigma_K^{\mathrm{roll}}$ by an upper bound on the conditional sub-Gaussian proxy. Multi-step action chunks~\cite{chi2025diffusion, zhao2023learning} enter the framework unchanged by treating a length-$H$ chunk as a single $Hm$-dimensional prediction error and evaluating $\Gamma_T(K)$ at the chunk-execution rate. For multi-joint systems with non-uniform gains the matrix-level comparison through $X_\infty(K)$ replaces the scalar reduction whenever the shape-preserving structure of Definition~\ref{def:shape} fails to hold uniformly across joints.

\section{Conclusion}\label{sec:conclusion}

We quantified how PD gains shape BC failure through sub-Gaussian propagation, a validation-loss-based failure bound, a scalar upper-bound ordering, and global canonical monotonicity across underdamped and overdamped regimes. The stationary variance $X_\infty^{\mathrm{c}}=\sigma^2 K_p/(2K_d)$ recovers the attenuation mechanism of~\cite{bronars2026tune} and embeds it in finite-horizon failure bounds. Future work should tighten the discrete-time bound and extend the analysis to nonlinear contact dynamics and joint gain-policy optimization.

\bibliography{references}

\end{document}